\documentclass[runningheads]{llncs}

\usepackage[T1]{fontenc}
\usepackage[utf8]{inputenc}
\usepackage{amsmath,amssymb}
\usepackage{booktabs}
\usepackage{graphicx}
\usepackage{multirow}
\usepackage{url}
\usepackage{enumitem}
\usepackage[hidelinks]{hyperref}

\graphicspath{{fig/}}

\begin{document}

\title{Mitigating Catastrophic Forgetting in Streaming Generative and Predictive Learning via Stateful Replay}
\titlerunning{Stateful Replay for Streaming Learning}

\author{Du Wenzhang}
\authorrunning{Du Wenzhang}
\institute{Dept. of Computer Engineering, Mahanakorn University of Technology, International College (MUTIC)\\Bangkok, Thailand\\\email{dqswordman@gmail.com}}

\maketitle

\begin{abstract}
Neural models deployed on data streams must track a drifting distribution with limited memory.
Sequential fine-tuning on each phase (SeqFT) is simple and architecture-agnostic, but can catastrophically forget earlier regimes when gradients from later phases conflict.
Replay with a finite buffer (Replay) is an equally simple remedy, yet its behaviour across heterogeneous streams and generative objectives is not well understood.

We cast reconstruction, forecasting and classification uniformly as negative log-likelihood minimisation over phase-wise distributions, and quantify forgetting via phase-wise changes in risk.
Viewing SeqFT and Replay as stochastic gradient methods for an ideal joint objective, we give a short gradient-alignment analysis showing how Replay mixes current and historical gradients to turn many ``forgetting steps'' into benign updates.
We then build a mixed generative--predictive benchmark of six streaming scenarios on three public datasets (Rotated MNIST, ElectricityLoadDiagrams2011--2014, Airlines), and log per-phase initial and final metrics to support reproducible analysis.

Across three seeds and matched training budgets, the pattern is sharp:
on heterogeneous multi-task streams (RotMNIST digit pairs, Airlines airline groups), Replay dramatically reduces catastrophic forgetting; on benign time-based streams (Electricity and Airlines), SeqFT and Replay behave similarly and forgetting is negligible.
Code and logs will be released upon publication to facilitate reproducibility.
\keywords{Streaming learning \and Replay memory \and Generative models \and Time series}
\end{abstract}

\section{Introduction}
\label{sec:intro}

Many deployed learning systems operate on \emph{streams} rather than static datasets \cite{gama2014survey,bifet2010moa}.
Electricity providers record long-term load curves,
airlines log every flight,
and perception pipelines observe a continuous feed of images and signals.
Models are updated online or in small batches on the newest data, often with strict memory limits.
The most common training protocol is \emph{sequential fine-tuning} (SeqFT): train on phase $1$, then continue optimisation on phase $2$, and so on.

SeqFT reuses standard training code and is architecture-agnostic, but is vulnerable to \emph{catastrophic forgetting}~\cite{mccloskey1989catastrophic,goodfellow2013empirical,parisi2019continual}:
when later phases correspond to different sub-populations, label subsets or tasks, gradients from new phases can overwrite parameters that were useful for earlier ones.
For generative objectives this is especially problematic: once an autoencoder or forecaster no longer reconstructs historical regimes, its outputs stop reflecting the system's history.

Replay with a finite buffer (Replay) is arguably the simplest continual-learning mechanism \cite{lin1992self,isele2018selective}:
keep a small buffer of past examples and mix them into current mini-batches.
Replay is widely used, but empirical reports are mixed.
More sophisticated continual-learning methods based on parameter-importance regularisation, knowledge distillation and generative replay have also been proposed~\cite{kirkpatrick2017ewc,zenke2017synaptic,li2017learning,shin2017continual,lopezpaz2017gradient,kemker2018measuring}, but these often introduce additional complexity and tuning effort.
On some benchmarks replay yields large gains, on others it appears unnecessary.
This raises two basic questions:
\emph{(i) when is replay memory theoretically justified and practically necessary in streaming learning, and (ii) how does its effect differ between generative vs.\ predictive tasks and between heterogeneous vs.\ near-stationary streams?}

\paragraph{Contributions.}
We deliberately focus on a minimalist mechanism---stateful replay with a fixed-capacity buffer---and ask how far it goes when applied consistently across objectives and modalities.
Our contributions are:
\begin{enumerate}[leftmargin=*]
  \item \textbf{Unified streaming formulation.}
  We cast auto-encoding, forecasting and classification uniformly as negative log-likelihood minimisation over phase-wise data distributions, and define a simple phase-wise forgetting functional that applies across metrics.
  \item \textbf{Gradient-alignment view of replay.}
  We interpret SeqFT and Replay as stochastic gradient methods for an ideal joint objective and show that forgetting on a past phase is governed, to first order, by the inner product between its gradient and the update direction.
  Replay replaces the pure current-phase gradient by a mixture of current and historical gradients; when gradients conflict, this mixture turns many ``forgetting steps'' into benign updates.
  \item \textbf{Mixed benchmark with transparent logging.}
  We construct six streaming scenarios on three public datasets:
  Rotated MNIST (reconstruction and digit-pair classification),
  ElectricityLoadDiagrams2011--2014 (one-step load forecasting under time- and meter-group splits),
  and Airlines (delay prediction under time- and airline-group splits).
  All per-phase initial and final metrics for both methods and three seeds are stored in a single structured log, which we use to generate all tables and figures.
  \item \textbf{Empirical characterisation.}
  With matched training budgets, Replay dramatically reduces catastrophic forgetting on genuinely interfering streams (digit pairs and airline groups), while behaving like SeqFT on benign time-based streams where drift is mild.
\end{enumerate}

Our goal is not to propose a complex new algorithm, but to position stateful replay as a strong, theoretically interpretable and well-documented baseline for streaming generative and predictive learning.

\section{Setup and Methods}
\label{sec:setup_method}

\subsection{Streaming generative formulation}

We observe phases $t = 1,\dots,T$, each associated with a distribution $P_t$ over pairs $(x,y)$ and a finite sample
\[
  D_t = \{(x_i^{(t)}, y_i^{(t)})\}_{i=1}^{n_t} \sim P_t^{n_t}.
\]
We consider a model $f_\theta$ with parameters $\theta \in \mathbb{R}^d$ and loss
\begin{equation}
  \label{eq:loss_def}
  \ell(f_\theta(x), y) = -\log q_\theta(y \mid x),
\end{equation}
where $q_\theta$ is a conditional density or mass function.
This covers all tasks considered:
\begin{itemize}[leftmargin=*]
  \item \textbf{Reconstruction} (RotMNIST): $y=x$, $q_\theta(\cdot\mid x)$ is Gaussian with mean $f_\theta(x)$; we evaluate by MSE.
  \item \textbf{Forecasting} (Electricity): $x$ is a past window, $y$ the next time step; we evaluate by MSE.
  \item \textbf{Classification} (RotMNIST, Airlines): $y\in\{1,\dots,C\}$, $q_\theta(y\mid x)$ is softmax; we evaluate by accuracy but train with cross-entropy.
\end{itemize}

The population risk on phase $t$ and the ideal joint risk are
\begin{equation}
  \label{eq:risk_defs}
  R_t(\theta) = \mathbb{E}_{(x,y)\sim P_t}[\ell(f_\theta(x),y)],
  \qquad
  R_{\mathrm{joint}}(\theta) = \frac{1}{T}\sum_{t=1}^T R_t(\theta).
\end{equation}
In an offline setting one could pool all $D_t$ and run SGD on their union.
In streaming we must update $\theta$ as phases arrive, without revisiting the full history.

\subsection{Phase-wise forgetting}

Let $\theta_t$ denote the parameters after finishing phase $t$.
For each phase $k$ we distinguish:
\begin{itemize}[leftmargin=*]
  \item \emph{Initial performance:} empirical risk $\hat{R}_k(\theta_k)$ on a held-out validation split from $P_k$;
  \item \emph{Final performance:} $\hat{R}_k(\theta_T)$ on the same split after training on all $T$ phases.
\end{itemize}
We define phase-wise forgetting as
\begin{equation}
  \label{eq:forgetting_def}
  F_k =
  \begin{cases}
    \hat{R}_k(\theta_T) - \hat{R}_k(\theta_k), & \text{for loss metrics},\\[2pt]
    s_k^{\mathrm{init}} - s_k^{\mathrm{final}}, & \text{for accuracy},
  \end{cases}
\end{equation}
where $s_k^{\mathrm{init}}$ and $s_k^{\mathrm{final}}$ are initial and final accuracies.
Thus $F_k>0$ indicates forgetting, while $F_k<0$ indicates positive backward transfer.
We summarise each dataset--scenario by the average $\bar{F} = \tfrac{1}{T}\sum_k F_k$.

\subsection{SeqFT and stateful replay}

\paragraph{Sequential fine-tuning (SeqFT).}
SeqFT processes phases in order.
At phase $t$ it runs mini-batch SGD on
\begin{equation}
  \hat{R}_t(\theta) = \frac{1}{n_t}\sum_{(x,y)\in D_t} \ell(f_\theta(x),y),
\end{equation}
starting from $\theta_{t-1}$ and producing $\theta_t$.
Let $g_t(\theta)=\nabla_\theta \hat{R}_t(\theta)$ and $\tilde{g}_t$ its mini-batch estimate; an SGD step is $\theta\leftarrow\theta-\eta_t \tilde{g}_t(\theta)$.

\paragraph{Stateful replay (Replay).}
Replay augments SeqFT with an episodic buffer $\mathcal{B}$ of capacity $C$ storing past examples.
After finishing phase $t$, we insert a subset of $D_t$ into $\mathcal{B}$ and evict oldest entries to maintain capacity (reservoir-style~\cite{vitter1985random}).
During phase $t>1$, each update uses a mixed mini-batch: we draw $B$ examples from $D_t$ and, if $\mathcal{B}$ is non-empty, $B$ from $\mathcal{B}$.
Let $\lambda\in[0,1]$ denote the fraction of buffer samples (we use $\lambda\approx0.5$).
The expected gradient at phase $t$ is
\begin{equation}
  \label{eq:replay_grad}
  g_t^{\mathrm{rep}}(\theta)
  = (1-\lambda)\nabla R_t(\theta) + \lambda \nabla R_{\mathcal{B}}^{(t)}(\theta),
\end{equation}
where $R_{\mathcal{B}}^{(t)}$ is the risk under the buffer distribution at phase $t$.
The state at the start of phase $t$ is $(\theta_{t-1},\mathcal{B}_{t-1})$, hence \emph{stateful} replay.

\section{Gradient Alignment View}
\label{sec:theory}

We now sketch a simple gradient-alignment view that connects SeqFT and Replay to phase-wise forgetting.
The aim is to explain the empirical trends we observe, not to provide a fully non-asymptotic theory for deep networks.

\subsection{One-step forgetting and alignment}

Fix a past phase $k<t$ and consider a small parameter update at phase $t$:
$\theta' = \theta - \eta d$ with step direction $d$ and small $\eta>0$.
A first-order expansion of $R_k$ gives
\begin{equation}
  \label{eq:one_step}
  R_k(\theta') \approx R_k(\theta) - \eta \,\langle \nabla R_k(\theta), d \rangle.
\end{equation}
Thus the sign of $\langle \nabla R_k, d \rangle$ determines whether this step helps or hurts phase $k$.

For SeqFT, $d\approx\nabla R_t(\theta)$.
Define the cosine similarity between phases $k$ and $t$ as
\[
  \cos\phi_{k,t}(\theta)
  = \frac{\langle \nabla R_k(\theta), \nabla R_t(\theta)\rangle}
         {\|\nabla R_k(\theta)\|\,\|\nabla R_t(\theta)\|}.
\]
If $\cos\phi_{k,t}>0$, small steps for phase $t$ also decrease $R_k$ (positive backward transfer).
If $\cos\phi_{k,t}<0$, gradients are conflicting and training on phase $t$ increases $R_k$ (local forgetting).
Our heterogeneous streams (digit-pair and airline-group splits) are precisely in this regime, while time-based streams tend to have non-negative cosines.

\subsection{Replay as gradient mixing}

Replay replaces the pure current gradient by the mixture~\eqref{eq:replay_grad}.
Assume the buffer approximates the empirical mixture of past phases, so that
\[
  \nabla R_{\mathcal{B}}^{(t)}(\theta) \approx \bar{g}_{<t}(\theta)
  := \frac{1}{t-1}\sum_{j=1}^{t-1}\nabla R_j(\theta),
\]
and define
\[
  d^{\mathrm{rep}} = (1-\lambda)\nabla R_t(\theta) + \lambda \bar{g}_{<t}(\theta).
\]

\begin{proposition}[Alignment condition]
\label{prop:alignment}
Fix $k<t$ and $\theta$.
Assume:
\begin{enumerate}[label=(\roman*),leftmargin=*]
  \item \textbf{Conflict with the current phase:}
    $\langle \nabla R_k(\theta), \nabla R_t(\theta)\rangle < 0$;
  \item \textbf{Benign historical mixture:}
    $\langle \nabla R_k(\theta), \bar{g}_{<t}(\theta)\rangle \ge 0$.
\end{enumerate}
Then there exists $\lambda^\star \in (0,1)$ such that for all $\lambda \in [\lambda^\star,1]$,
\[
  \langle \nabla R_k(\theta), d^{\mathrm{rep}}\rangle \ge 0,
\]
so the first-order change in $R_k$ under a Replay step is non-positive.
\end{proposition}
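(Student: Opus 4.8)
The plan is to use the bilinearity of the inner product to reduce the statement to an elementary affine inequality in the single scalar $\lambda$. First I would write out
\[
  \langle \nabla R_k(\theta), d^{\mathrm{rep}}\rangle
  = (1-\lambda)\,\langle \nabla R_k(\theta),\nabla R_t(\theta)\rangle
    + \lambda\,\langle \nabla R_k(\theta),\bar{g}_{<t}(\theta)\rangle,
\]
abbreviate $a := \langle \nabla R_k(\theta),\nabla R_t(\theta)\rangle$ and $b := \langle \nabla R_k(\theta),\bar{g}_{<t}(\theta)\rangle$, and note that the quantity of interest is $h(\lambda) := (1-\lambda)a + \lambda b = a + \lambda(b-a)$. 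Assumption (i) says $a<0$ and assumption (ii) says $b\ge 0$, so $h(0)=a<0$ while $h(1)=b\ge 0$.

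The second step is to locate the threshold. Since $b-a = b+|a|>0$, the affine map $h$ is strictly increasing, so $h(\lambda)\ge 0$ holds exactly for $\lambda\ge\lambda^\star$ with
\[
  \lambda^\star := \frac{-a}{b-a} = \frac{|a|}{\,b+|a|\,}.
\]
Because $|a|>0$ and $b\ge 0$, this $\lambda^\star$ lies in $(0,1]$, and strictly in $(0,1)$ whenever $b>0$; for any $\lambda\in[\lambda^\star,1]$ we then have $\langle \nabla R_k(\theta), d^{\mathrm{rep}}\rangle = h(\lambda)\ge 0$. Substituting $d = d^{\mathrm{rep}}$ into the first-order expansion~\eqref{eq:one_step} gives $R_k(\theta-\eta d^{\mathrm{rep}}) \approx R_k(\theta) - \eta\,h(\lambda) \le R_k(\theta)$ for small $\eta>0$, i.e.\ a non-positive first-order change in $R_k$, which is exactly the conclusion.

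The one place that needs attention is the degenerate case $b=0$: then $\lambda^\star=1$, so the open-interval claim $\lambda^\star\in(0,1)$ would be vacuous and the conclusion would survive only at $\lambda=1$. I would resolve this by reading hypothesis (ii) in its generic strict form $\langle\nabla R_k(\theta),\bar{g}_{<t}(\theta)\rangle>0$ (the case of a genuinely helpful historical mixture, which is the intended picture), under which $\lambda^\star$ sits strictly inside $(0,1)$; the weak version of (ii) still yields the inequality, just with a closed endpoint at $\lambda=1$. Beyond this bookkeeping there is no real obstacle: the mathematical content of Proposition~\ref{prop:alignment} is carried entirely by the modelling assumptions (i)--(ii) on the geometry of the phase gradients, and the proof itself is a one-line linearity argument.
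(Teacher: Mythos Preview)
Your argument is correct and is essentially the same affine/intermediate-value argument as the paper's sketch: define $h(\lambda)=\langle\nabla R_k,(1-\lambda)\nabla R_t+\lambda\bar g_{<t}\rangle$, use (i) and (ii) for the endpoint signs, and invoke affinity to get a threshold $\lambda^\star$. You add an explicit formula $\lambda^\star=|a|/(b+|a|)$ and correctly flag the boundary case $b=0$ (where $\lambda^\star=1$), a wrinkle the paper's sketch glosses over.
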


\begin{proof}
Sketch.
Let
\[
  h(\lambda) = \langle \nabla R_k, (1-\lambda)\nabla R_t + \lambda \bar{g}_{<t}\rangle.
\]
By (i) we have $h(0)<0$, by (ii) we have $h(1)\ge0$.
Since $h$ is affine in $\lambda$, it has a root $\lambda^\star\in(0,1)$ and $h(\lambda)\ge0$ for all $\lambda\ge\lambda^\star$.
Substituting into~\eqref{eq:one_step} yields the claim.\qed
\end{proof}

In words, when gradients from the current phase conflict with those of a past phase while the historical mixture is benign for that phase, Replay can flip a forgetting step into a non-forgetting step.
This is exactly the regime we see on RotMNIST digit pairs and airline-group streams.

\subsection{Finite buffer approximation}

In practice the buffer contains a finite number $C$ of examples rather than the full past stream.
If the gradient of a single loss is bounded by $\|\nabla_\theta \ell(f_\theta(x),y)\|\le G$, standard concentration bounds show that, at phase $t$, the buffer gradient deviates from $\bar{g}_{<t}$ by at most $O(G/\sqrt{C})$ with high probability.
With a few hundred buffered examples this approximation error is small, and we observe empirically that Replay is robust as long as the buffer is not extremely tiny.

\section{Benchmarks and Protocol}
\label{sec:benchmarks}

\begin{table}[t]
  \centering
  \small
  \caption{Datasets and streaming scenarios used in our experiments.}
  \label{tab:datasets}
  \begin{tabular}{lllclcl}
  \toprule
  Dataset & Split & Task & Metric & \# Phases & \# Seeds & Methods \\
  \midrule
  RotMNIST & digits\_pairs & classification & acc & 5 & 3 & SeqFT, Replay \\
  RotMNIST & digits\_pairs & reconstruction & mse & 5 & 3 & SeqFT, Replay \\
  Electricity & time & forecasting & mse & 5 & 3 & SeqFT, Replay \\
  Electricity & meters & forecasting & mse & 5 & 3 & SeqFT, Replay \\
  Airlines & time & classification & acc & 5 & 3 & SeqFT, Replay \\
  Airlines & airline\_group & classification & acc & 5 & 3 & SeqFT, Replay \\
  \bottomrule
  \end{tabular}
\end{table}

We evaluate SeqFT and Replay on three datasets and six streaming scenarios (Table~\ref{tab:datasets}), covering classification, reconstruction and forecasting.

\paragraph{Rotated MNIST (RotMNIST).}
RotMNIST contains rotated $28\times28$ grayscale digits derived from the MNIST dataset~\cite{lecun1998gradient} and widely used in studies of catastrophic forgetting~\cite{goodfellow2013empirical}.
We define five phases by grouping digits into disjoint pairs:
$\{0,1\}$, $\{2,3\}$, $\{4,5\}$, $\{6,7\}$, $\{8,9\}$.
For reconstruction we train a convolutional autoencoder; for classification we share the encoder and attach a linear classifier head that always predicts all $10$ digits, making phases strongly interfering.

\paragraph{Electricity.}
ElectricityLoadDiagrams2011--2014~\cite{dua2019uci} contains hourly load measurements for 370 customers.
We normalise each series, form sliding windows of length $96$, and forecast the next step with MSE.
We consider:
(i) \emph{time}: five contiguous temporal segments; and
(ii) \emph{meters}: five disjoint groups of customers, each with the full time span.

\paragraph{Airlines.}
The Airlines dataset~\cite{bifet2010moa} consists of over half a million flights with features such as carrier ID, origin and destination airports, day-of-week, scheduled departure time and duration.
The label is a binary delay indicator.
We define:
(i) \emph{time}: five chronological slices; and
(ii) \emph{airline\_group}: five groups of carriers with distinct delay patterns.

\paragraph{Models and training.}
RotMNIST uses a CNN encoder--decoder for reconstruction and a linear classifier head for digits; Electricity uses a small 1D CNN/GRU forecaster; Airlines uses a 3-layer MLP over normalised tabular features.
All models are implemented in PyTorch~\cite{paszke2019pytorch}, trained with Adam~\cite{kingma2015adam} and batch sizes 128--256.
For each dataset--scenario we fix the number of epochs per phase and learning rate based on preliminary tuning, then apply the same schedule to SeqFT and Replay so that training budgets match.
Replay uses a buffer of order $10^3$ examples and a replay ratio $\lambda\approx0.5$.
We run three seeds $\{13,21,42\}$ for each method and scenario.

\paragraph{Metrics and logging.}
For each method, seed and phase $k$ we log initial and final metrics (accuracy or MSE) on a held-out validation split, together with dataset, scenario and method identifiers.
These logs are then aggregated to produce all tables and figures.

\section{Results}
\label{sec:results}

We now summarise the main empirical findings, with per-phase numbers reported in the following tables.

\subsection{Classification streams}
\label{subsec:cls_results}

\paragraph{RotMNIST digit-pair classification.}
Figure~\ref{fig:rotmnist_cls} plots per-phase initial and final accuracy for SeqFT and Replay.
SeqFT achieves high initial accuracy on each digit pair but catastrophically forgets early phases: the first pair loses dozens of accuracy points by the end of the stream, and phases $2$--$3$ also suffer large drops.
Replay preserves a large fraction of early-phase performance while matching SeqFT on the last phase.
Table~\ref{tab:rotmnist_cls} reports per-phase initial/final accuracy and forgetting $F_k$.

\begin{figure}[t]
  \centering
  \includegraphics[width=.8\linewidth]{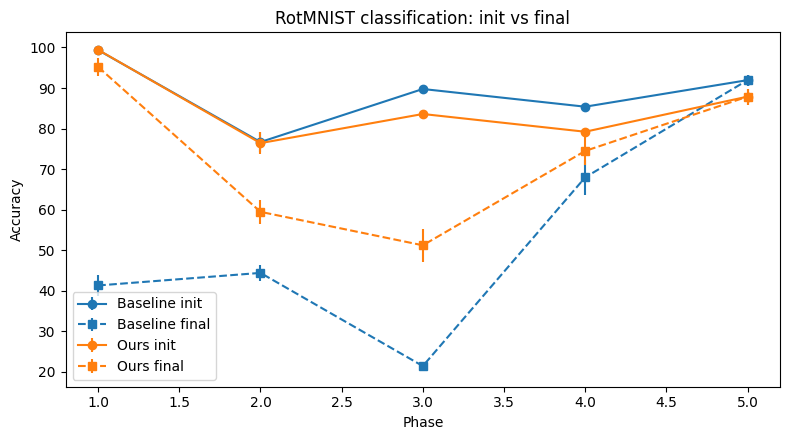}
  \caption{RotMNIST digit-pair classification.
  For each phase we plot initial (solid) and final (dashed) test accuracy for SeqFT and Replay, averaged over seeds.
  SeqFT heavily forgets early digit pairs, while Replay preserves most of their performance.}
  \label{fig:rotmnist_cls}
\end{figure}

\begin{table}[t]
  \centering
  \small
  \caption{RotMNIST digit-pair classification: per-phase initial and final accuracy (\%) and forgetting $F_k$ (init--final, in accuracy points), averaged over three seeds.}
  \label{tab:rotmnist_cls}
  \begin{tabular}{c ccc ccc}
  \toprule
    & \multicolumn{3}{c}{SeqFT} & \multicolumn{3}{c}{Replay (ours)} \\
  \cmidrule(r){2-4}\cmidrule(l){5-7}
  Phase & Init (\%) & Final (\%) & $F_k$ & Init (\%) & Final (\%) & $F_k$ \\
  \midrule
  1 & \(99.4 \pm 0.2\) & \(41.3 \pm 2.7\) & \(58.0 \pm 2.5\) & \(99.4 \pm 0.3\) & \(95.2 \pm 2.2\) & \(\mathbf{4.2} \pm 2.2\) \\
  2 & \(76.7 \pm 2.4\) & \(44.4 \pm 1.9\) & \(32.3 \pm 3.2\) & \(76.4 \pm 2.8\) & \(59.5 \pm 2.9\) & \(\mathbf{16.9} \pm 0.4\) \\
  3 & \(89.8 \pm 0.8\) & \(21.5 \pm 1.1\) & \(68.3 \pm 0.5\) & \(83.6 \pm 0.3\) & \(51.2 \pm 4.1\) & \(\mathbf{32.4} \pm 3.8\) \\
  4 & \(85.4 \pm 0.4\) & \(68.0 \pm 4.2\) & \(17.4 \pm 4.5\) & \(79.2 \pm 0.7\) & \(74.5 \pm 3.5\) & \(\mathbf{4.7} \pm 2.7\) \\
  5 & \(91.9 \pm 1.3\) & \(91.9 \pm 1.3\) & \(0.0 \pm 0.0\) & \(87.9 \pm 2.0\) & \(87.9 \pm 2.0\) & \(0.0 \pm 0.0\) \\
  \bottomrule
  \end{tabular}
\end{table}

\paragraph{Airlines airline-group classification.}
Figure~\ref{fig:airlines_group} shows initial and final accuracy for the heterogeneous carrier-group stream.
SeqFT overfits the last group and severely forgets the first group; Replay roughly halves this loss while keeping later phases competitive.
Table~\ref{tab:airlines_group} details per-phase initial/final accuracy and forgetting.
These patterns align with Proposition~\ref{prop:alignment}: gradients from distinct carrier groups conflict, and replayed samples from earlier groups stabilise the updates.

\begin{figure}[t]
  \centering
  \includegraphics[width=.8\linewidth]{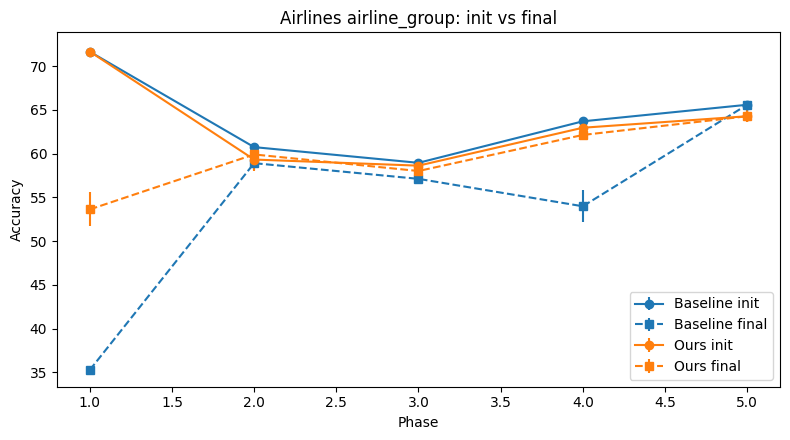}
  \caption{Airlines airline-group classification.
  Replay consistently reduces forgetting on early carrier groups compared to SeqFT, while remaining competitive on later groups.}
  \label{fig:airlines_group}
\end{figure}

\begin{table}[t]
  \centering
  \small
  \caption{Airlines airline-group classification: per-phase initial and final accuracy (\%) and forgetting $F_k$, averaged over seeds.}
  \label{tab:airlines_group}
  \begin{tabular}{c ccc ccc}
  \toprule
    & \multicolumn{3}{c}{SeqFT} & \multicolumn{3}{c}{Replay (ours)} \\
  \cmidrule(r){2-4}\cmidrule(l){5-7}
  Phase & Init (\%) & Final (\%) & $F_k$ & Init (\%) & Final (\%) & $F_k$ \\
  \midrule
  1 & \(71.6 \pm 0.4\) & \(35.3 \pm 0.1\) & \(36.4 \pm 0.4\) & \(71.7 \pm 0.3\) & \(53.6 \pm 1.9\) & \(\mathbf{18.0} \pm 1.7\) \\
  2 & \(60.7 \pm 0.3\) & \(58.9 \pm 0.8\) & \(1.8 \pm 1.0\) & \(59.3 \pm 1.4\) & \(59.9 \pm 0.4\) & \(\mathbf{-0.6} \pm 1.2\) \\
  3 & \(58.9 \pm 0.3\) & \(57.1 \pm 0.4\) & \(1.8 \pm 0.6\) & \(58.6 \pm 0.3\) & \(58.0 \pm 0.4\) & \(\mathbf{0.6} \pm 0.2\) \\
  4 & \(63.7 \pm 0.3\) & \(54.0 \pm 1.8\) & \(9.7 \pm 2.0\) & \(63.0 \pm 0.5\) & \(62.1 \pm 0.3\) & \(\mathbf{0.8} \pm 0.2\) \\
  5 & \(65.6 \pm 0.4\) & \(65.6 \pm 0.4\) & \(0.0 \pm 0.0\) & \(64.3 \pm 0.7\) & \(64.3 \pm 0.7\) & \(0.0 \pm 0.0\) \\
  \bottomrule
  \end{tabular}
\end{table}

\paragraph{Airlines time-based classification.}
On the temporal airline split, both methods show near-zero or slightly negative forgetting.
Later phases act as additional regularisation for the classifier, and final accuracy on early phases can even improve slightly.
This matches the benign drift regime where cross-phase gradients are largely aligned and replay has little to contract.

\subsection{Reconstruction and forecasting}
\label{subsec:recon_forecast}

\paragraph{Electricity forecasting.}
Figure~\ref{fig:electricity_mse} plots initial and final MSE per phase for Electricity under time and meter splits.
Curves for SeqFT and Replay almost overlap; in many cases final MSE is slightly lower than initial, indicating positive transfer.
Per-phase initial and final MSE and the corresponding forgetting values $F_k$ follow the same pattern.
Overall, forgetting is negligible or slightly negative for both methods, consistent with the view that these streams resemble non-stationary single-task training.

\begin{figure}[t]
  \centering
  \includegraphics[width=.9\linewidth]{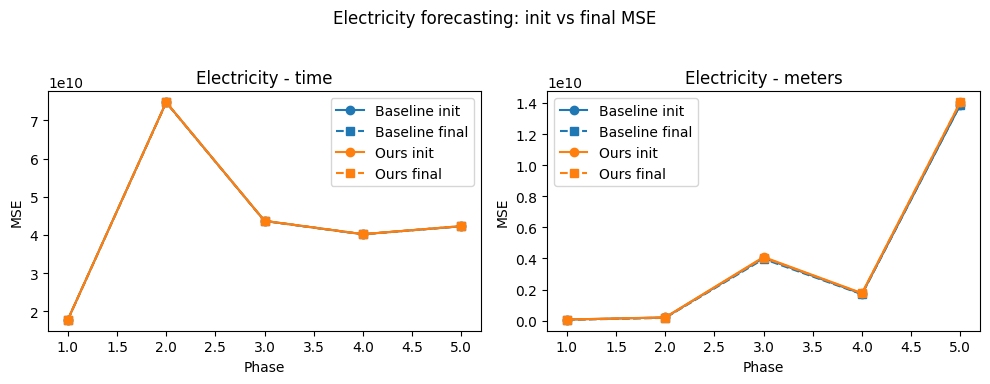}
  \caption{Electricity forecasting under temporal (left) and meter-group (right) splits.
  Initial and final MSE per phase are nearly identical for SeqFT and Replay, indicating negligible forgetting and some positive transfer.}
  \label{fig:electricity_mse}
\end{figure}

\paragraph{RotMNIST reconstruction.}
Per-phase reconstruction MSE on the RotMNIST digit-pair stream shows that both SeqFT and Replay often exhibit \emph{negative} forgetting (final MSE lower than initial), reflecting strong shared structure between digit pairs:
later phases serve as additional regularisation rather than conflicting tasks.

\subsection{Aggregate forgetting}
\label{subsec:aggregate}

To summarise classification behaviour we compute, for each dataset--scenario and method, the mean forgetting $\bar{F}$ across phases and seeds.
Figure~\ref{fig:forgetting_summary} and Table~\ref{tab:avg_forgetting} present these aggregates.

\begin{figure}[t]
  \centering
  \includegraphics[width=.7\linewidth]{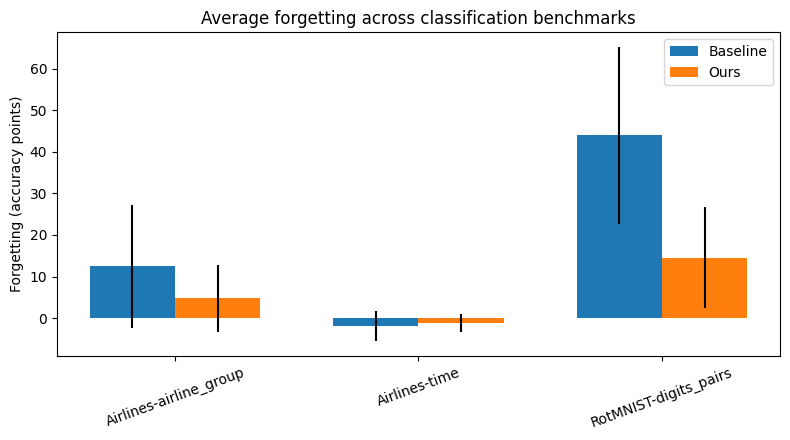}
  \caption{Average forgetting on classification scenarios (SeqFT vs.\ Replay).
  Bars show mean forgetting $F_k$ (init--final, in accuracy points) over phases and seeds; error bars indicate standard deviation.}
  \label{fig:forgetting_summary}
\end{figure}

\begin{table}[t]
  \centering
  \small
  \caption{Average forgetting on classification benchmarks (mean over seeds and phases).
  Forgetting is reported in accuracy points (init--final).}
  \label{tab:avg_forgetting}
  \begin{tabular}{lllc}
  \toprule
  Dataset & Split & Method & Avg forgetting $F$ \\
  \midrule
  RotMNIST & digits\_pairs & SeqFT & \(35.2 \pm 28.2\) \\
  RotMNIST & digits\_pairs & Replay & \(\mathbf{11.7 \pm 13.2}\) \\
  Airlines & time & SeqFT & \(-1.5 \pm 3.4\) \\
  Airlines & time & Replay & \(\mathbf{-1.0 \pm 2.0}\) \\
  Airlines & airline\_group & SeqFT & \(10.0 \pm 15.2\) \\
  Airlines & airline\_group & Replay & \(\mathbf{3.8 \pm 8.0}\) \\
  \bottomrule
  \end{tabular}
\end{table}

On heterogeneous multi-task streams (RotMNIST digit pairs, Airlines airline-group), SeqFT exhibits large positive forgetting, while Replay reduces $|\bar{F}|$ by roughly a factor of two to three.
On benign time-based streams, average forgetting is close to zero for both methods, and Replay behaves like a mild regulariser.

\section{Conclusion}
\label{sec:conclusion}

We presented a unified study of stateful replay for streaming generative and predictive learning.
Formulating reconstruction, forecasting and classification under a single negative log-likelihood objective, and analysing SeqFT and Replay through gradient alignment, we showed how a minimal replay buffer can substantially mitigate catastrophic forgetting on genuinely interfering task streams, while behaving similarly to sequential fine-tuning on benign time-based streams.
Experiments on RotMNIST, Electricity and Airlines confirmed this dichotomy.

Our study suggests that a carefully implemented replay buffer is a strong first-line tool for continual learning in streaming systems.
Future work includes more principled buffer construction and sampling policies, combining replay with parameter-regularisation methods, and scaling the framework to larger architectures and multi-modal streams under realistic resource constraints.

\end{document}